\documentclass[journal]{IEEEtran}

\ifCLASSINFOpdf
\else
   \usepackage[dvips]{graphicx}
\fi
\usepackage{url}
\usepackage[noadjust]{cite}
\hyphenation{op-tical net-works semi-conduc-tor}
\input{mysymbol.sty}
\usepackage{mathtools}
\usepackage{multicol}
\usepackage{multirow}
\usepackage{subcaption}
\usepackage{booktabs}
\DeclarePairedDelimiterX{\norm}[1]{\lVert}{\rVert}{#1}
\usepackage{microtype}
\usepackage{graphicx}
\usepackage{xcolor}
\usepackage{algorithm2e}
\usepackage{booktabs} 
\usepackage{algpseudocode}
\usepackage{float}
\usepackage{balance}
\usepackage{graphicx}
\usepackage[utf8]{inputenc} 
\usepackage[T1]{fontenc}    
\usepackage{hyperref}       
\usepackage{url}            
\usepackage{booktabs}       
\usepackage{amsfonts}       
\usepackage{nicefrac}       
\usepackage{microtype}      
\usepackage{xcolor}         
\usepackage{multicol}
\usepackage{verbatim}

\usepackage{amssymb,amsmath,amsthm, amsfonts}

\newtheorem{proposition}{Proposition}
\newtheorem{remark}{Remark}

\definecolor{morange}{rgb}{0.8,0.2,0}
\definecolor{mblue}{rgb}{0,0.3,1.0}
\definecolor{mred}{rgb}{0.9,0.1,0.1}
\definecolor{mpurple}{rgb}{0 0 0}

\allowdisplaybreaks[4]
\begin{document}

\title{Fairness-aware Optimal Graph Filter Design}

\author{
\IEEEauthorblockN{O. Deniz Kose\IEEEauthorrefmark{1},} 
\and
\IEEEauthorblockN{Yanning Shen\IEEEauthorrefmark{1},}
\and 
\IEEEauthorblockN{and Gonzalo Mateos\IEEEauthorrefmark{2}}

\IEEEcompsocitemizethanks{\IEEEcompsocthanksitem\IEEEauthorrefmark{1}Department of Electrical Engineering and Computer Science, University of California Irvine, USA}

\IEEEcompsocitemizethanks{\IEEEcompsocthanksitem \IEEEauthorrefmark{2}Department of Electrical and Computer Engineering, University of Rochester, USA}

\IEEEcompsocitemizethanks{Preliminary ideas that inspired this work will appear at the 2023 Asilomar Conference on Signals, Systems, and Computers \cite{kose2023fairnessfilt}.}

\IEEEcompsocitemizethanks{Work in this paper was supported in part by the NSF awards CCF-1750428, CCF-1934962, and ECCS 2207457.}

}


\markboth{}
{Shell \MakeLowercase{\textit{et al.}}: Bare Demo of IEEEtran.cls for IEEE Journals}
\maketitle

\begin{abstract}
Graphs are mathematical tools that can be used to represent complex real-world interconnected systems, such as financial markets and social networks. Hence, machine learning (ML) over graphs has attracted significant attention recently. However, it has been demonstrated that ML over graphs amplifies the already existing bias towards certain under-represented groups in various decision-making problems due to the information aggregation over biased graph structures. Faced with this challenge, here we take a fresh look at the problem of bias mitigation in graph-based learning by borrowing insights from graph signal processing. Our idea is to introduce predesigned graph filters within an ML pipeline to reduce a novel unsupervised bias measure, namely the correlation between sensitive attributes and the underlying graph connectivity. We show that the optimal design of said filters can be cast as a convex problem in the graph spectral domain. We also formulate a linear programming (LP) problem informed by a theoretical bias analysis, which attains a closed-form solution and leads to a more efficient fairness-aware graph filter. Finally, for a design whose degrees of freedom are independent of the input graph size, we minimize the bias metric over the family of polynomial graph convolutional filters. Our optimal filter designs offer complementary strengths to explore favorable fairness-utility-complexity tradeoffs. For performance evaluation, we conduct extensive and reproducible node classification experiments over real-world networks. Our results show that the proposed framework leads to better fairness measures together with similar utility compared to state-of-the-art fairness-aware baselines.
\end{abstract}

\begin{IEEEkeywords}
Fairness, graph filter, graph neural network, node classification, bias mitigation.
\end{IEEEkeywords}

\IEEEpeerreviewmaketitle

\section{Introduction}

We live in the era of connectivity, where the actions of humans and devices are increasingly driven by their relations to others. Concurrently, a significant amount of data describing different interconnected systems, such as social networks, the Internet of Things (IoT), the Web, and financial markets, is increasingly available. Processing and learning from such data can provide significant understanding and advancements for the corresponding networked systems \cite{kolaczyk2014statistical, hamilton2020graph}. In this context, machine learning (ML) over graphs has attracted increasing attention \cite{chami2022machine, gcn}, since graphs are widely utilized to represent complex underlying relations in real-world networks~\cite{mateos19spmag}. 

These relational patterns can be captured by graph edges, while attributes of nodes (nodal features) can be interpreted as signals defined on the vertices. For example, in a social network, user ages can be modeled as a graph signal, and the friendship information can be encoded by the edges. Graph signal processing (GSP) extends the tools in classical signal processing to graph signals \cite{gsp}, such as frequency analysis, sampling, and filtering \cite{gft, dgsp,  marques2017stationary, isufi2016autoregressive, isufi2022graph, romero2016kernel}. GSP and ML over graphs are closely intertwined, where the tools in one domain can be useful in the other one \cite{gsp, dong2020graph}. For instance, it has been demonstrated that graph neural networks (GNNs) can be designed, analyzed, and improved by leveraging GSP-based insights \cite{gama2020graphs, gama2020stability, dong2020graph}, which underscores the advancements that can be made  by cross-pollinating the findings in both domains. In this paper, we align with this vision and leverage GSP advances to enhance fairness in ML over graphs pipelines.\vspace{2pt}

\noindent\textbf{The pursuit of fairness in ML over graphs.}
Despite the growing interest in learning over graphs, the widespread deployment of these algorithms in real-world decision systems depends heavily on how socially responsible they are. Motivated by this concern, fairness in ML algorithms has attracted significant attention recently \cite{mehrabi2021survey, pessach2022review, holstein2019improving}. This work focuses on group fairness, which ensures that the learning algorithms incur no performance gap with respect to sensitive/protected attributes (such as ethnicity and religion). For example, the predictions of a job recommendation algorithm should be independent of the gender of applicants for a fair algorithm with respect to the sensitive attribute gender. Moreover,  throughout this paper, \emph{algorithmic bias} refers to the stereotypical correlations the learning algorithms encode and further propagate with respect to these sensitive attributes. Despite how critical the fairness of algorithms is for their applicability in real-world decision systems, several studies have demonstrated that ML models propagate the historical bias within the training data and lead to discriminatory results in ensuing applications \cite{ beutel2017data}. Specific to graph-based learning, the utilization of graph structure in the algorithm design has been shown to amplify the already existing bias \cite{fairgnn}. Recognizing these compounded challenges, recent works focus on fairness-aware learning over graphs and advocate different techniques to mitigate bias, such as adversarial regularization \cite{bose2019compositional, fairgnn}, fairness constraints \cite{buyl2021kl, kose2022fairnorm}, and fairness-aware graph data augmentation \cite{kose2022fair, spinelli2021fairdrop, dong2022edits}; see also Section II for additional discussion on related work.\vspace{2pt} 

\noindent\textbf{Proposed approach and innovations in context.} In this study, we advocate fairness-aware optimal graph filter designs. In order to mitigate bias derived from the graph topology, we subsequently introduce these predesigned filters within standard ML pipelines. To this end, we introduce a bias metric, $\rho$, which can be employed in graph-based unsupervised learning approaches and measures the linear correlation between an effective (filter-dependent) connectivity pattern and the sensitive attributes. 
We show that the $\rho$-minimizing optimal filter design can be cast as a convex problem in the graph spectral domain.
While this proposed approach is remarkably effective in mitigating graph-amplified biases, the total number of optimization variables is equal to the input graph size. Accordingly, solving the optimization problem becomes computationally expensive for large input graphs. For a more efficient fairness-aware solution, we carry out a bias analysis and upper bound the bias metric $\rho$ by bringing to bear GSP notions. Based on these theoretical findings, we formulate a novel linear programming (LP) filter design problem that attains a closed-form solution minimizing the derived upper bound. 
We finally propose a design whose degrees of freedom are independent of the size of the input graph, by minimizing $\rho$ over the family of polynomial graph convolutional filters. 

Our previous endeavor \cite{ouricassp} is also built upon spectral analysis of graph signals, where a fairness-aware \emph{dimensionality reduction} algorithm was developed. However, in \cite{ouricassp}, the information carried in certain frequencies is completely removed, which can adversely affect the overall utility (accuracy for node classification) of the underlying ML task. Instead, in the present work, we propose a suite of bias mitigation approaches to effectively filter out traces of the sensitive attribute signal (e.g., race, gender in social networks), while also offering the flexibility to delineate favorable fairness-utility-complexity tradeoffs in ML over graphs. Furthermore, unlike the intuitive but heuristic approach in the conference precursor to this paper \cite{kose2023fairnessfilt}, the fairness-aware graph filter designs proposed here are rooted on well-defined optimality criteria.\vspace{2pt}

\noindent\textbf{Summary of contributions.} Overall, our contributions are:\\
\textbf{i)} We introduce a novel, correlation-based bias metric for graphs, which can facilitate fairness-aware unsupervised learning from network data;\\
\textbf{ii)} We show that filtering  nodal representations which are obtained via graph aggregation can be used to manipulate the bias metric. An optimal graph filter is designed to minimize $\rho$ by solving a convex optimization problem in the spectral domain;\\
\textbf{iii)} For a more efficient bias mitigation solution, we upper bound $\rho$ by utilizing GSP-based tools and then minimize this surrogate cost, leading to an LP problem that attains a closed-form optimal solution. By restricting the search to the class of polynomial graph convolutional filters, the number of optimization variables decouples from the input graph size, and the resulting fairness-aware filters can be implemented in a distributed fashion; \\
\textbf{iv)} The novel filter designs are versatile and can be employed in different stages of the learning pipeline, as well as for various graph-based learning frameworks; and\\
\textbf{v)} Comprehensive experimental results for node classification on real-world networks corroborate the effectiveness of the proposed methods in mitigating bias while providing comparable utility to state-of-the-art fairness-aware baselines. In the interest of reproducible research, the code used to obtain all results in this paper is publicly available.

\noindent\emph{Notation:} The entries of a matrix $\mathbf{V}$ and a vector $\mathbf{v}$ are denoted by $V_{ij}$ and $v_i$, respectively. Calligraphic capital letters are utilized to represent sets. $\mathbf{I}_{N}$ refers to an $N\times N$ identity matrix. The notation $^\top$ stands for the transpose operation. For a vector $\bbv$, $\operatorname{diag}(\mathbf{v})$ represents a diagonal matrix whose $i$th diagonal entry equals to $v_i$. The $\ell_p-$norm of vector $\mathbf{v}$ is given by $\|\mathbf{v}\|_p:=\left(\sum_{i=1}^n\left|v_i\right|^p\right)^{1 / p}$.



\section{Related Work}
Here, we briefly review relevant related work to better position our contributions in context.

\subsection{Graph filters}
Extending classical signal processing tools to networked systems, graph filters are specific operators to manipulate graph signals. The existing literature generally focuses on linear graph filters represented by polynomials of a graph-shift operator \cite{dgsp, segarra2015distributed, segarra2015interpolation, gama2020graphs, isufi2022graph, romero2016kernel}. Graph filters are utilized for a number of applications, including but not limited to modeling the dynamics of opinion formation in social networks \cite{hegselmann2002opinion, patterson2010interaction}, or modeling the diffusion/percolation dynamics over networks \cite{segarra2015distributed, mei2015signal}. 
Recently, with the success of graph neural
networks (GNNs) for a number of graph-based tasks, graph filters have attracted increasing attention as the key component of GNNs \cite{gama2020graphs, ma2021unified, zhu2021interpreting, ruiz2021graph, isufi2021edgenets}. However, to the best of our knowledge, there has been no prior attempt to examine the benefits of pretrained filters towards decorrelating learned nodal representations from sensitive attributes. So far, optimal graph filter designs have not incorporated fairness criteria.


\subsection{Fairness-aware learning on graphs}
 In the fairness-aware graph-based learning domain, \cite{fairwalk} is a pioneering study that proposes a bias mitigation solution for random walk-based algorithms. Moreover, motivated by its success in general fairness-aware ML, adversarial regularization is also employed by several graph-based ML frameworks \cite{fairgnn, bose2019compositional, debiasing, guo2022learning}.   Specifically, \cite{fairgnn} focuses on partially available sensitive attributes, and \cite{debiasing} considers knowledge graphs. By modeling the sensitive attribute signal in the prior distribution, \cite{debayes} proposes a Bayesian strategy for fair node representation learning. In addition, \cite{subgroup} links the subgroup generalization to accuracy disparity based on a PAC-Bayesian analysis, while \cite{heterogeneous} presents multiple strategies to reduce the algorithmic bias in the representations of heterogeneous information networks. There is also a line of work that designs fair graph data augmentations to mitigate the bias within nodal features and the graph topology \cite{kose2022fair, kose2022fair2, nifty, spinelli2021fairdrop}. Finally, with a specific focus on link prediction, \cite{dyadic,all} introduce fairness-aware strategies that alter the adjacency matrix, while \cite{buyl2021kl} employs a fairness-aware regularizer. Unlike most of these works, the proposed strategies herein are based on a theoretical bias analysis and enjoy well-defined optimality. Furthermore, the collection of fairness-aware graph filters designed in Section \ref{sec:filter_design} can be employed in a versatile manner as both a pre-processing and post-processing operator in a number of graph-based learning environments; see also the numerical tests in Section \ref{sec:exps}.  
 While the draft of this paper was being finalized, we became aware of an interesting unpublished preprint~\cite{krasanakis2023graph} that explores fairness for GSP-based graph mining applications with a markedly different goal than ours. Indeed,~\cite{krasanakis2023graph} advocates a GNN framework as a surrogate of a fairness-aware graph filter, and here we design graph filters to mitigate bias in general ML on graphs pipelines. The approach in~\cite{krasanakis2023graph} is to ``edit'' the input graph signal for fairness enhancement and does not focus on optimal filter design. 
  Overall, our study is the first attempt to design fair graph filters to mitigate intrinsic bias by cross-pollinating the tools of GSP and ML over graphs. 

\begin{figure*}[!h]
	\begin{centering}
	\hspace{2cm} \includegraphics[width=0.8\textwidth]{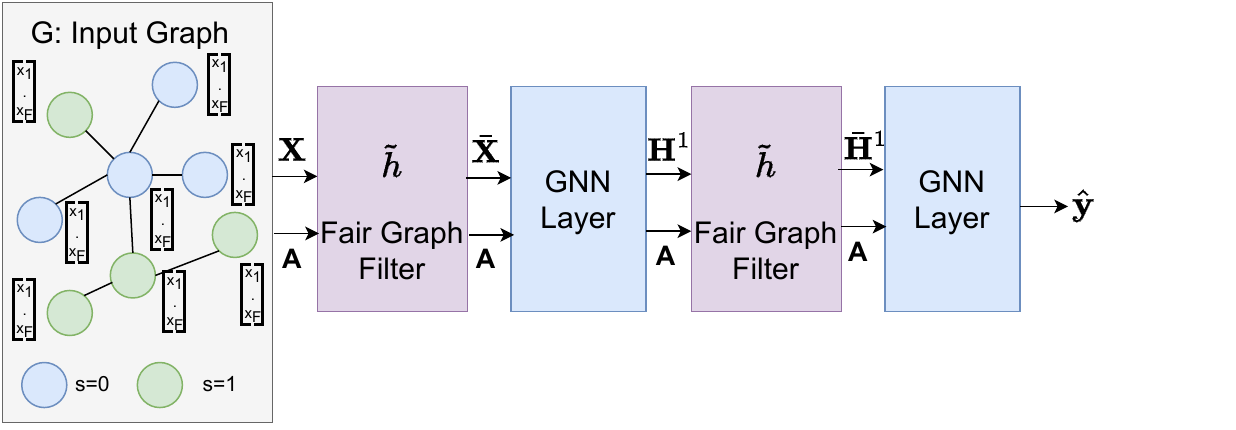} 
	\caption{The employment of a fairness-aware graph filter within a standard two-layer GNN-based learning pipeline as a pre-trained bias mitigation operator.}
	\label{fig:ex}
	\end{centering}
\end{figure*}

\section{Preliminaries and Problem Statement}
\label{sec:methodology}
\par The focus of this study is to mitigate bias in graph-based learning algorithms by employing graph filters for a given undirected graph $\mathcal{G}:=(\mathcal{V}, \mathcal{E})$, where $\mathcal{V}:=$ $\left\{v_{1}, v_{2}, \ldots, v_{N}\right\}$ denotes the set of nodes and $\mathcal{E} \subseteq \mathcal{V} \times \mathcal{V}$ is the set of edges. Connectivity of the input graph is encoded in the symmetric adjacency matrix $\mathbf{A} \in\{0,1\}^{N \times N}$, where $A_{i j}=1$ if and only if $\left(v_{i}, v_{j}\right) \in \mathcal{E}$. In addition, $\mathbf{X} \in \mathbb{R}^{N \times F}$ represents the nodal features of $\mathcal{G}$, whose columns are graph signals (one per feature). The diagonal degree matrix is $\mathbf{D} \in \mathbb{R}^{N \times N}$, where $D_{ii}$ denotes the degree of $v_{i}$. Let $\mathbf{L} = \mathbf{I}_{N}-\mathbf{D}^{-\frac{1}{2}}\mathbf{A}\mathbf{D}^{-\frac{1}{2}}$ denote the normalized graph Laplacian matrix, where the normalized adjacency matrix is represented by $\hat{\mathbf{A}}:=\mathbf{D}^{-\frac{1}{2}}\mathbf{A}\mathbf{D}^{-\frac{1}{2}}$. 

The \emph{sensitive attribute} is the nodal feature (such as ethnicity, religion) on which the decisions should not be dependent for fair decision-making. Herein, the sensitive attribute is assumed to be binary and is denoted by $\mathbf{s} \in \{-1,1\}^{N }$. The feature vector and the sensitive attribute of node $v_{i}$ are denoted by $\mathbf{x}_{i} \in \mathbb{R}^{F}$ and $s_{i} \in \{-1,1\}$, respectively. In (semi-supervised) node classification tasks, some vertices have (e.g., binary) labels $y_i$. For concrete examples of nodal features, labels, and sensitive attributes in several real-world network datasets, see Section \ref{subsec:datas}. 

\subsection{Graph signal processing fundamentals} 
The graph Fourier transform (GFT) is an orthonormal transform that provides the representation of a graph signal $\mathbf{z} \in \mathbb{R}^N$ in the graph spectral domain \cite{gft, gft2, gft3}. Specifically, taking the GFT of a graph signal amounts to projecting the signal onto a space spanned by the orthogonal eigenvectors of the positive semi-definite (PSD) normalized graph Laplacian matrix $\mathbf{L}$ \cite{gft}. Let the eigendecomposition of the normalized Laplacian be $\mathbf{L}=\mathbf{V} \mathbf{\Lambda} \mathbf{V}^{\top},$ where $\mathbf{\Lambda}=\textrm{diag}(\lambda_1,\ldots,\lambda_{N})$ collects the non-negative eigenvalues and $\mathbf{V}$ is the matrix of Laplacian eigenvectors. 
Then, the GFT of the graph signal $\mathbf{z} \in \mathbb{R}^{N}$ is given by $\Tilde{\mathbf{z}}=\mathbf{V}^{\top} \mathbf{z}$. Graph frequencies correspond to the eigenvalues of the Laplacian (a measure of smoothness of the eigenvectors with respect to the graph), meaning that the GFT decomposes signals into frequency modes (i.e., the eigenvectors of $\mathbf{L}$) of different variability over $\mathcal{G}$.

In classical signal processing, filters are utilized to manipulate signals such that their, e.g., unwanted components are attenuated or removed. Similarly, graph filters can be used to modify graph signals for different purposes, including graph signal classification \cite{zhu2003semi, belkin2004semi}, smoothing, and denoising \cite{zhang2008graph, shuman2011chebyshev}. Filtering an input graph signal $\mathbf{z}_{\text{in}} \in \mathbb{R}^N$ via a filter with frequency response $\tilde{\mathbf{h}}:=[\tilde{h}_{1}, \ldots,  \tilde{h}_{N}]^\top$ can be mathematically expressed as (e.g.,~\cite{gsp,gft,gama2020graphs})
\begin{equation}
\label{eq:id}
\begin{split}
    \mathbf{z}_{\text{out}}&= \mathbf{V} \underbrace{\operatorname{diag}(\tilde{h}_{1}, \ldots ,\tilde{h}_{N} ) \Tilde{\mathbf{z}}_{\text{in}}}_{\text {Frequency domain filtering}}.
\end{split}
\end{equation}
Therefore, filtering in the frequency domain corresponds to point-wise multiplication of the input signal's GFT, $\Tilde{\mathbf{z}}_{\text{in}}$, with the frequency response of graph filter, $\tilde{\mathbf{h}}$. Identity \eqref{eq:id} is akin to a convolution theorem for graph signals.


\subsection{Problem statement} 
In this paper, given $\mathcal{G}$ and $\mathbf{s}$, we address the problem of designing graph filters with frequency response $\tilde{\mathbf{h}} \in \mathbb{R}^{N}$, so that the bias caused by the graph topology can be attenuated with the application of the designed filters in the learning algorithm. A possible application of the fairness-aware graph filter in a GNN-based learning pipeline is depicted in Figure \ref{fig:ex}. As we elaborate in Section \ref{subsec:met}, bias attenuation will be pursued by minimization of a judicious bias metric; namely, the linear correlation between $\mathbf{s}$ and the effective graph aggregation operator that results upon filtering with $\tilde{\mathbf{h}}$.

\section{Bias Mitigation Rationale and Criterion}\label{sec:rationale_metric}
In this section, we first motivate our filtering approach for bias mitigation and provide a graph spectral domain illustration of the fairness-utility tradeoff. We then propose a filter-dependent bias measure that will serve as a criterion for our subsequent designs.
\subsection{Spectrum analysis}
\label{subsec:spec}
The homophily principle suggests that nodes with similar attributes are more likely to connect in networks, which hints at denser connectivity between the nodes with the same sensitive attributes and also with the same label \cite{segregation}. Hence, both the sensitive attributes $\mathbf{s}$ and node labels $\mathbf{y}$ are expected to be smooth signals over $\mathcal{G}$. In the GSP parlance, this implies higher energy concentration for $\Tilde{\mathbf{s}}$ and $\Tilde{\mathbf{y}}$ over lower frequencies. Now, we wish to design a filter that preserves the necessary information for a downstream task (node classification in this paper) after ``filtering out'' traces of the sensitive attribute.  
Naturally, the extent of the overlap between the spectra of  $\Tilde{\mathbf{s}}$ and $\Tilde{\mathbf{y}}$ plays an important role in the feasibility of said fairness-aware filter design. 

\begin{figure}[ht]
    \centering
        \includegraphics[width=\linewidth]{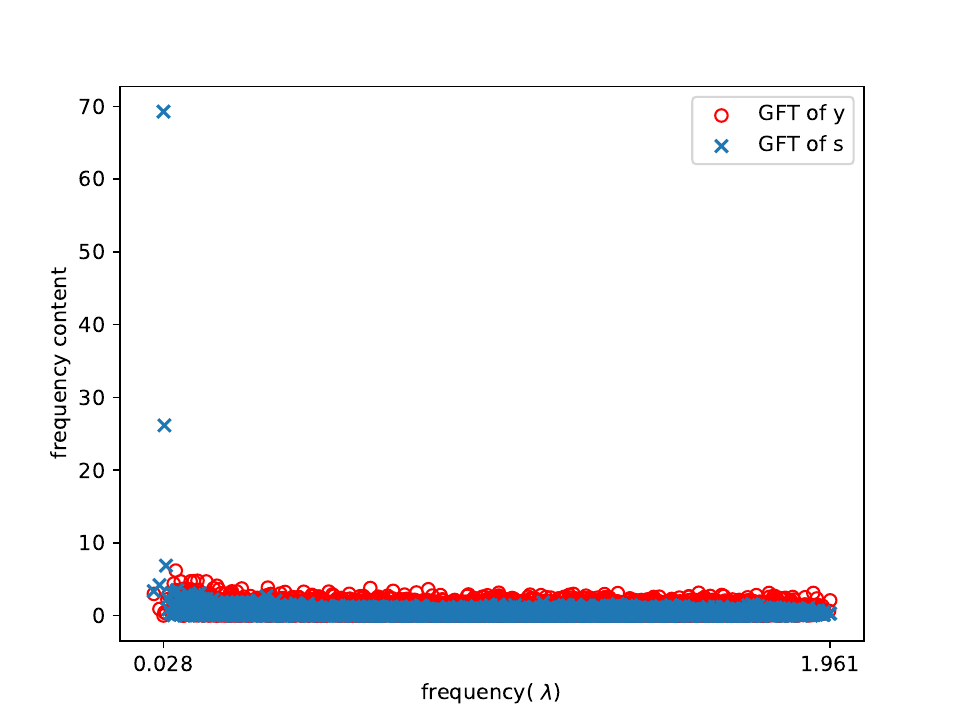}
        \includegraphics[width=\linewidth]{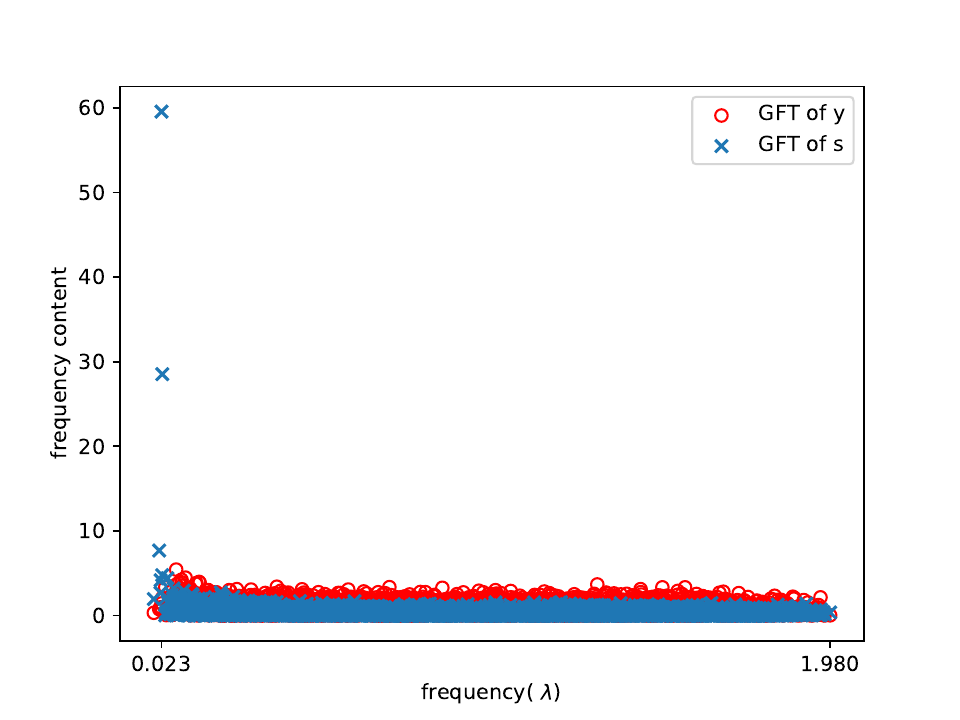}
    \caption{Spectra of the graph signals $\mathbf{s}$ (sensitive attributes) and $\mathbf{y}$ (labels) over different graph frequencies, for the real-world social network datasets (top) Pokec-z and (bottom) Pokec-n. Dataset statistics are presented in Table \ref{table:stats}. There are few low frequencies where the magnitudes of $\Tilde{\mathbf{s}}$ are markedly higher  than those of $\Tilde{\mathbf{y}}$.
    }
    \label{fig:within}
\end{figure}

To examine this tension, the GFT coefficients in $\Tilde{\mathbf{s}}$ and $\Tilde{\mathbf{y}}$ over different frequencies are depicted in Figure \ref{fig:within} for two real-world social networks with more than $6000$ nodes. For additional details of the datasets, see Section \ref{subsec:datas}. 
As expected, it can be observed that the spectra of $\Tilde{\mathbf{s}}$ and $\Tilde{\mathbf{y}}$ exhibit similar characteristics. However, there are certain (low) frequencies where the  magnitudes of $\Tilde{\mathbf{s}}$ are markedly higher than those of $\Tilde{\mathbf{y}}$. This subtle but important discordance (which is not just an artifact of these datasets) inspires our pursuit of frequency-selective graph filters for bias mitigation. The goal is to attenuate the sensitive information while preserving graph signals necessary for downstream ML tasks.

\subsection{Bias metric}
\label{subsec:met}
It has been demonstrated that leveraging graph structure in learning algorithms amplifies already existing bias due to the biased connectivity information \cite{fairgnn}. To exemplify this important point, in social networks, users (nodes) are often more likely to connect to other users with the same sensitive attributes (e.g., ethnicity, religion). This leads to denser connectivity between the nodes from the same sensitive groups, and hence a graph structure that is highly correlated with the sensitive attributes \cite{segregation}. 
Motivated by this, the linear correlation between the sensitive attribute signal $\bbs$ and graph topology is considered for the ensuing bias analysis and mitigation strategy. 

Several graph-based learning approaches rely on node representations obtained via local aggregation of information (possibly followed by a pointwise non-linearity) \cite{gcn, gsp}. In the simplest possible terms, this process can be summarized as
\begin{equation}
\label{eq:aggr}
\mathbf{R}=\hat{\mathbf{A}}\mathbf{X},
\end{equation}
where $\mathbf{R}$ denotes the aggregated node representations, and $\mathbf{X}$ is the input graph signal (or the representations from the previous layer as in Figure \ref{fig:ex}); see, e.g.,~\cite{chami2022machine, gcn}. In \eqref{eq:aggr}, we have purposely omitted learnable weights to simplify the notation while retaining the components essential to our argument. Hence, if a filtered graph signal $\mathbf{\bar{X}}= \mathbf{V} \text{diag}(\tilde{\mathbf{h}}) \mathbf{V}^{\top} \mathbf{X}$ is input, the obtained representation becomes
\begin{equation}
\label{eq:effec}
    \begin{split}
        \mathbf{R}^{\text{f}}&=\hat{\mathbf{A}}\mathbf{\bar{X}}\\
        &=\mathbf{V} (\mathbf{I}_{N} - \boldsymbol{\Lambda})\mathbf{V}^{\top}\mathbf{\bar{X}}\\
       &=\mathbf{V} (\mathbf{I}_{N} - \boldsymbol{\Lambda} )\mathbf{V}^{\top} \mathbf{V} \text{diag}(\tilde{\mathbf{h}}) \mathbf{V}^{\top} \mathbf{X}\\
        &=\mathbf{V} (\mathbf{I}_{N} - \boldsymbol{\Lambda} )\text{diag}(\tilde{\mathbf{h}}) \mathbf{V}^{\top} \mathbf{X}\\
        &=\mathbf{\bar{A}} \mathbf{X},
    \end{split}
\end{equation}
where $\mathbf{\bar{A}}:=\mathbf{V} (\mathbf{I}_{N} - \boldsymbol{\Lambda} )\text{diag}(\tilde{\mathbf{h}}) \mathbf{V}^{\top}$. 

Therefore, if a filtered signal $\mathbf{\bar{X}}$ is fed to the aggregation process, the effective network operator that is utilized in the information aggregation becomes $\mathbf{\bar{A}}$. Building on this quite simple but key observation, the linear correlation between the sensitive attributes $\mathbf{s}$ and $\mathbf{\bar{A}}$ is hence employed as a bias measure. This correlation is proportional to $\mathbf{s}^{\top} \mathbf{\bar{A}}_{:,i}$, for the $i$th column of $\mathbf{\bar{A}}$. Overall, we aim at minimizing the \emph{total correlation} \cite{kose2022fair} between $\mathbf{\bar{A}}$ and $\mathbf{s}$, which we denote as $\rho :=\|\mathbf{s}^{\top} \mathbf{\bar{A}}\|_{2}$. Notice that $\rho=\rho(\tilde{\mathbf{h}})$ because $\mathbf{\bar{A}}=\mathbf{V} (\mathbf{I}_{N} - \boldsymbol{\Lambda} )\text{diag}(\tilde{\mathbf{h}}) \mathbf{V}^{\top}$, hence we can search over filter frequency responses to reduce graph-induced bias. This filter design problem is the subject deal with next.


\section{Fair Graph Filter Designs}\label{sec:filter_design}

\subsection{Direct optimization of $\rho$ } 
\label{subsec:direct}
Here we describe our convex optimization framework for fairness-aware optimal graph filter design.
The idea is to formulate the following optimization problem to reduce the bias metric $\rho =\|\mathbf{s}^{\top} \mathbf{\bar{A}}\|_{2}$ via the employment of a graph filter with frequency response $\tilde{\mathbf{h}}$:
\begin{equation}
\label{eq:direct}
    \begin{aligned}
\tilde{\mathbf{h}}^{\text{f}}:=  & \:\underset{{\mathbf{\tilde{h}}}}{\operatorname{argmin}} \hspace{0.2cm} \rho(\tilde{\mathbf{h}}) \\
 \text { s. to }  & \rho(\tilde{\mathbf{h}})=\|\mathbf{s}^{\top} \mathbf{V} (\mathbf{I}_{N} - \boldsymbol{\Lambda} )\text{diag}(\tilde{\mathbf{h}}) \mathbf{V}^{\top} \|_{2},\\
 &\sum_{i=1}^{N} \tilde{h}_{i} \geq N \tau,\\
& 
0 \leq \tilde{h}_{i} \leq 1, \forall i \in \{1, \dots, N\}.
\end{aligned}
\end{equation}
While we have discussed the criterion at length, the constraints deserve justification. Here, $\tau$ is a hyperparameter to control the amount of filtered information. It is important to emphasize that $\rho$ can be minimized by setting $\tilde{\mathbf{h}}^{\text{f}}=\mathbf{0}$, which is equivalent to filtering out all information. This trivial solution is fair but naturally undesirable, because it sacrifices all utility. As we argued in Section \ref{subsec:spec}, there needs to be a trade-off between utility and fairness. This trade-off can be empirically adjusted via the design parameter $\tau$. 
Furthermore, the entries of $\tilde{\mathbf{h}}^{\text{f}}$ are constrained to not exceed $1$. The spectrum of the input graph signal does not change for those frequencies $\lambda_{i}$, where $\tilde{h}_{i}=1$. Thus, this constraint is utilized to preserve information in the frequencies that do not propagate bias as dictated by $\rho$. Overall, this choice is motivated by utility considerations in the downstream tasks. Note that the formulation in \eqref{eq:direct} is convex for the specified constraints; thus, it can be solved to global optimality using off-the-shelf methods. 

\begin{remark}[Spectral-domain design and eigendecomposition]\label{rem:eigenvectors}\normalfont 
The advocated graph spectral-domain design of the bias mitigating filter necessitates computing an eigendecomposition of the normalized Laplacian $\mathbf{L}$ prior to optimization. This $O(N^3)$ step can certainly challenge the applicability of the proposed approach when it comes to learning over large-scale graphs. This limitation nonwithstanding, our experimental results demonstrate this framework can comfortably handle network datasets with several thousands of nodes. Follow-up work on eigendecomposition-free filter designs in the vertex domains is certainly of interest; see also the related discussion preceding Remark \ref{rem:flexible}.
\end{remark}

\subsection{Linear programming with closed-form solution}\label{subsec:LP}
The formulation in \eqref{eq:direct} involves the optimization of $N$ variables, which incurs high complexity for large graphs. To sidestep this potential computational bottleneck, we derive a surrogate cost that is amenable to efficient minimization. Specifically, we first conduct a bias analysis and upper bound the bias metric $\rho$. We then show that minimization of the upper bound results in an LP, whose solution is a filter with frequency response $\tilde{\mathbf{h}}_{cf}^{\text{f}}$. Remarkably, the solution to the LP is given in closed form, and once more, it can effectively ``filter out" the sensitive information from the bias-amplifying graph connectivity.

First, Proposition \ref{prop:corr1} reveals the sources of bias and provides an upper bound on the total correlation between $\mathbf{s}$ and $\mathbf{\bar{A}}$.
\begin{proposition}
\label{prop:corr1}
Consider filtering signals using a graph filter with frequency response $\tilde{\mathbf{h}}$ prior to aggregation using $\hat{\mathbf{A}}$, and let $\mathbf{\bar{A}}:=\mathbf{V} (\mathbf{I}_{N} - \boldsymbol{\Lambda} )\text{diag}(\tilde{\mathbf{h}}) \mathbf{V}^{\top}$. Then, $\rho :=\|\mathbf{s}^{\top} \mathbf{\bar{A}}\|_{2}$ can be upper bounded by
\begin{equation}\label{rho}
  \rho \leq \sqrt{N} \sum_{i=1}^{N} |\tilde{s}_i| |(1-\lambda_{i})| |\tilde{h}_{i}|.
\end{equation}
\end{proposition}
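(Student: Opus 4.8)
The plan is to expose the row vector $\mathbf{s}^{\top}\mathbf{\bar{A}}$ entrywise in the spectral domain, bound each coordinate uniformly, and only then take the Euclidean norm. First I would use the orthonormality of $\mathbf{V}$ to write $\mathbf{s}^{\top}\mathbf{V}=\tilde{\mathbf{s}}^{\top}$, so that $\mathbf{s}^{\top}\mathbf{\bar{A}}=\tilde{\mathbf{s}}^{\top}(\mathbf{I}_{N}-\boldsymbol{\Lambda})\,\text{diag}(\tilde{\mathbf{h}})\,\mathbf{V}^{\top}$. Reading off the $j$th component gives $[\mathbf{s}^{\top}\mathbf{\bar{A}}]_{j}=\sum_{i=1}^{N}\tilde{s}_{i}\,(1-\lambda_{i})\,\tilde{h}_{i}\,V_{ji}$, where $V_{ji}$ denotes the $(j,i)$ entry of $\mathbf{V}$.

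Next I would apply the triangle inequality together with the elementary fact that every entry of an orthogonal matrix has modulus at most one (each column of $\mathbf{V}$ is a unit vector, hence $|V_{ji}|\le 1$ for all $i,j$). This yields the $j$-independent bound $|[\mathbf{s}^{\top}\mathbf{\bar{A}}]_{j}|\le \sum_{i=1}^{N}|\tilde{s}_{i}|\,|1-\lambda_{i}|\,|\tilde{h}_{i}|=:C$. Finally, since $\rho=\|\mathbf{s}^{\top}\mathbf{\bar{A}}\|_{2}=\big(\sum_{j=1}^{N}|[\mathbf{s}^{\top}\mathbf{\bar{A}}]_{j}|^{2}\big)^{1/2}$ and each of the $N$ summands is at most $C^{2}$, the sum is at most $NC^{2}$; taking square roots gives $\rho\le \sqrt{N}\,C=\sqrt{N}\sum_{i=1}^{N}|\tilde{s}_{i}|\,|1-\lambda_{i}|\,|\tilde{h}_{i}|$, which is exactly \eqref{rho}.

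There is no genuine obstacle here; the only thing worth flagging is which relaxation to make. Crudely bounding $|V_{ji}|\le 1$ and then summing $N$ identical terms is precisely what produces the $\sqrt{N}$ prefactor; a sharper route — keeping $\sum_{j}V_{ji}^{2}=1$ and using Cauchy--Schwarz, or simply observing that orthogonality of $\mathbf{V}$ already gives $\rho=\|\text{diag}(\tilde{\mathbf{h}})(\mathbf{I}_{N}-\boldsymbol{\Lambda})\tilde{\mathbf{s}}\|_{2}$ — would tighten the constant. I would still state the bound as in \eqref{rho}, since its key feature is not the constant but that it is \emph{separable and affine} in the magnitudes $|\tilde{h}_{i}|$, which is exactly the structure that makes the surrogate minimization a linear program with a closed-form solution in the next subsection.
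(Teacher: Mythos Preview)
Your proof is correct and reaches the stated bound, but the chain of inequalities differs from the paper's. The paper first relaxes $\|\cdot\|_{2}\le\|\cdot\|_{1}$ on the row vector $\tilde{\mathbf{s}}^{\top}(\mathbf{I}_{N}-\boldsymbol{\Lambda})\operatorname{diag}(\tilde{\mathbf{h}})\mathbf{V}^{\top}$, applies the triangle inequality, swaps the order of summation to isolate $\sum_{j}|V_{ji}|=\|\mathbf{V}_{:,i}\|_{1}$, and then invokes $\|\mathbf{V}_{:,i}\|_{1}\le\sqrt{N}\,\|\mathbf{V}_{:,i}\|_{2}=\sqrt{N}$ to obtain the prefactor. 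You instead bound each coordinate uniformly via $|V_{ji}|\le 1$ (i.e.\ $\|\mathbf{V}_{:,i}\|_{\infty}\le 1$) and only afterwards take the $\ell_{2}$ norm of an $N$-vector with entries bounded by $C$, which produces the same $\sqrt{N}$. Both routes are elementary; yours is arguably more direct since it avoids the initial $\ell_{2}\to\ell_{1}$ relaxation. Your closing remark is also on point: because $\mathbf{V}$ is orthogonal one actually has $\rho=\|\operatorname{diag}(\tilde{\mathbf{h}})(\mathbf{I}_{N}-\boldsymbol{\Lambda})\tilde{\mathbf{s}}\|_{2}$ exactly, so even the $\sqrt{N}$ is not needed for a linear upper bound; the paper does not exploit this, but as you note the constant is immaterial for the LP surrogate in Section~\ref{subsec:LP}.
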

\begin{proof}
Leveraging the definitions of $\rho$ and the effective aggregation operator $\bar{\mathbf{A}}$, we have $\mathbf{\bar{A}}:= \mathbf{V} (\mathbf{I}_{N} - \boldsymbol{\Lambda} )\text{diag}(\tilde{\mathbf{h}}) \mathbf{V}^{\top}$:
\begin{equation}
\label{eq:f}
    \begin{split}
       \rho &=\|\mathbf{s}^{\top} \mathbf{\bar{A}}\|_{2}\\
        &=\|\mathbf{s}^{\top} \mathbf{V} (\mathbf{I}_{N} - \boldsymbol{\Lambda} )\text{diag}(\tilde{\mathbf{h}}) \mathbf{V}^{\top}\|_{2}.
    \end{split}
\end{equation}
Furthermore, \eqref{eq:f} can be reformulated based on the definition of GFT for the sensitive attribute signal $\mathbf{s}$:
\begin{equation}
         \rho=\|\tilde{\mathbf{s}}^{\top}(\mathbf{I}_{N} - \boldsymbol{\Lambda} )\text{diag}(\tilde{\mathbf{h}}) \mathbf{V}^{\top}\|_{2}.
\end{equation}
By utilizing the norm inequality, $\rho$ can be upper bounded:
\begin{equation}
    \begin{split}
      \rho& \leq \|\tilde{\mathbf{s}}^{\top}(\mathbf{I}_{N} - \boldsymbol{\Lambda} )\text{diag}(\tilde{\mathbf{h}}) \mathbf{V}^{\top}\|_{1}\\
         &= \sum_{j=1}^{N}| \sum_{i=1}^{N}\tilde{s}_i (1-\lambda_{i}) \tilde{h}_{i} v_{ji}|.  
    \end{split}
\end{equation}
Based on the triangle inequality, the following inequality can further be derived:
\begin{equation}
    \begin{split}
          \rho& \leq \sum_{j=1}^{N} \sum_{i=1}^{N}|\tilde{s}_i (1-\lambda_{i}) \tilde{h_{i}} v_{ji}|\\
         &=  \sum_{j=1}^{N} \sum_{i=1}^{N} |\tilde{s}_i| |(1-\lambda_{i})| |\tilde{h_{i}}| |v_{ji}|\\
         &= \sum_{i=1}^{N} |\tilde{s}_i| |(1-\lambda_{i})| |\tilde{h}_{i}| \sum_{j=1}^{N} |v_{ji}|\\
         &= \sum_{i=1}^{N} |\tilde{s}_i| |(1-\lambda_{i})| \tilde{h}_{i} \|\mathbf{V}_{:,i}\|_{1}
    \end{split}
\end{equation}  
Moreover, the relation between the $\ell_{1}$ and $\ell_{2}$-norms of a vector $\mathbf{a} \in \mathbb{R}^{N}$ can be written as $\|a\|_{1} \leq \sqrt{N} \|a\|_{2}$, based on which it follows that:
\begin{equation}
\label{eq:mfin}
    \begin{split}
          \rho &\leq \sum_{i=1}^{N} |\tilde{s}_i| |(1-\lambda_{i})|\tilde{ h}_{i} \|\mathbf{V}_{:,i}\|_{1}\\
          &\leq \sqrt{N} \sum_{i=1}^{N} |\tilde{s}_i| |(1-\lambda_{i})| \tilde{h}_{i} \|\mathbf{V}_{:,i}\|_{2}\\
           &=\sqrt{N} \sum_{i=1}^{N} |\tilde{s}_i| |(1-\lambda_{i})| \tilde{h}_{i}, 
    \end{split}
\end{equation}
where the last equality holds because the eigenvectors of $\mathbf{L}$ are orthonormal.
\end{proof}
%
Proposition \ref{prop:corr1} shows that the linear correlation between the effective graph topology and the sensitive attributes is a function of $\sum_{i=1}^{N} |\tilde{s}_i| |(1-\lambda_{i})| |\tilde{h}_{i}|$. Therefore, we can design a ``matched" graph filter to reduce this term and hence the bias. Define $m_{i} := |\tilde{s}_i| |(1-\lambda_{i})| $, for all $i=1,\ldots,N$, and let $\mathbf{m}\in \mathbb{R}^N$ be the vector whose $i$th component is $m_i$. Then, the following LP problem can be formulated for the design of an optimal fair graph filter:
%
%
\begin{equation}
\label{eq:lp}
    \begin{aligned}
\tilde{\mathbf{h}}_{cf}^f :=  & \:\underset{{\tilde{\mathbf{h}}}}{\operatorname{argmin}} \hspace{0.2cm} \mathbf{m}^\top \tilde{\mathbf{h}} \\
 \text { s. to }  
 &\sum_{i=1}^{N} \tilde{h}_{i} \geq N \tau,\\
& 
0 \leq \tilde{h}_{i} \leq 1, \forall i \in \{1, \dots, N\}.
\end{aligned}
\end{equation}
The same set of constraints as in \eqref{eq:direct} are employed here. Let $\boldsymbol{\alpha} = \operatorname{argsort}(-\mathbf{m})$ be the vector containing the indices of the elements in $\mathbf{m}$ sorted in descending order. The closed-form solution for this LP problem can be obtained as:
\begin{equation}
\label{eq:filter}
    (\tilde{h}^{\text{f}}_{cf})_{\alpha_{i}}= \left[1 - \left[N(1-\tau) - \sum_{j=1}^{i-1} \left(1 - (\tilde{h}^{\text{f}}_{cf})_{\alpha_{j}}\right)\right]_+\right]_+,
\end{equation}
where $[x]_+:=\max(0,x)$ is a projection operator onto the non-negative reals.

\begin{proof}[Proof (sketch)]
It always holds that $m_i \geq 0$ and $\tilde{h}_{i} \geq 0$, for all $i=1, \dots, N$, due to the definition of $\mathbf{m}$ and the box constraints on each of the $\tilde{h}_{i}$. Therefore, the cost function is always non-negative, i.e., $\mathbf{m}^\top \tilde{\mathbf{h}} \geq 0$, and the equality is achieved when $\tilde{\mathbf{h}}= \mathbf{0}$. However, such a solution does not satisfy the constraint that lower bounds the sum of elements in $\tilde{\mathbf{h}}$ (unless when $\tau=0$, but as discussed in Section V-A, this case is of no practical interest). The conclusion is that the optimal solution is attained on the boundary of the feasible set, where $\tilde{h}_{i}$ takes the smallest possible values for the largest entries of $\mathbf{m}$, as long as the first constraint holds. Specifically, the optimal $\tilde{\mathbf{h}}$ has null entries (or entries that are smaller than $1$) in the indices where vector $\mathbf{m}$ takes the largest values as long as the filtering budget (imposed by the first constraint) is not exhausted, which provides the recursive solution in \eqref{eq:filter}.
\end{proof}

For the budget prescribed by $\tau$, the recursive definition of the filter's frequency response in \eqref{eq:filter} specifies the optimal solution of the LP design in \eqref{eq:lp}.

\subsection{Polynomial graph convolutional filter}
\label{subsec:conv}
The LP-based filter design in the previous section admits a closed-form solution, and accordingly, it offers computational savings relative to \eqref{eq:direct}, since solving the latter necessitates an iterative procedure. 
Here, instead, we adopt a polynomial graph filter parameterization \cite{isufi2022graph}, which offers an explicit handle on the number of optimization variables. This way, the number of variables decouples from (and can be markedly smaller than) the size of $\mathcal{G}$. 

Polynomial graph filters are often the operators of choice in several SP and ML tasks due to their parameter sharing property, locality and linear computational complexity. When these filters are used in GNNs, the parameter sharing property allows them to learn complex relations within graphs (including large-scale ones) based on a limited number of training samples. The locality property implies they can be implemented in a distributed fashion, solely via exchanges of information with neighboring nodes in the graph. Finally, their linear computational complexity aids scalability \cite{isufi2022graph}. 
Polynomial graph convolutional filters are linear mappings, $\mathbf{z}_{out}=\mathbf{H}\mathbf{z}_{in}$, between graph signals, where
\begin{equation}
\label{eq:fir}
    \mathbf{H}:=\sum_{l=0}^{L-1} h_l \mathbf{\hat{A}}^l.
\end{equation}
Here, $\mathbf{h}:=\left[h_0, \ldots, h_{L-1}\right]^\top$ are the filter coefficients with $L-1$ denoting the filter order, and $\mathbf{\hat{A}}$ is selected as the graph-shift operator \cite{dgsp} in our design. Notice how \ref{eq:fir} resembles a finite impulse response (FIR) filter, with the identification of $\hat{\mathbf{A}}^l$ as an $l$th-order shift operator acting on graph signals \cite{isufi2022graph}.

In the frequency domain, the filter's response is given by $\tilde{\mathbf{h}}:=\boldsymbol{\Psi} \mathbf{h}$, for the $N \times L$ Vandermonde matrix $\Psi$, where $\Psi_{i j}:=\left(1 - \Lambda_{i i}\right)^{j-1}$ \cite{sandryhaila2014discrete}. Based on this parameterization, the optimization problem in \eqref{eq:direct} can be reformulated as:
\begin{equation}
\label{eq:conv}
    \begin{aligned}
\mathbf{h}^{\text{f}}:=  &\: \underset{{\mathbf{h}}}{\operatorname{argmin}} \hspace{0.2cm} \rho(\mathbf{h}) \\
 \text { s. to }  & \rho(\mathbf{h}) = \|\mathbf{s}^{\top} \mathbf{V} (\mathbf{I}_{N} - \boldsymbol{\Lambda} )\text{diag}(\boldsymbol{\Psi} \mathbf{h}) \mathbf{V}^{\top} \|_{2}, \\
 &\sum_{i=1}^{N} (\boldsymbol{\Psi} \mathbf{h})_{i} \geq N \tau,\\
& 
0 \leq (\boldsymbol{\Psi} \mathbf{h})_{i} \leq 1, \forall i \in \{1, \dots, N\}.
\end{aligned}
\end{equation}
The number of optimization variables is $L$, regardless of the number of nodes in $\mathcal{G}$. By selecting a filter order that satisfies $L \ll N$, this approach can be a better fit for large graphs. Meanwhile, the fairness improvement it provides may be limited when compared to our previous designs, as its degrees of freedom are purposedly reduced. Another salient feature of the polynomial graph filter \eqref{eq:fir} obtained by solving \eqref{eq:conv} is that it can be \emph{directly implemented} in the vertex domain via (distributed) information exchanges among neighbors. Polynomial parameterizations of the filters $\tilde{\mathbf{h}}^f$ and $\tilde{\mathbf{h}}_{cf}^f$ can be obtained as well, because they are jointly diagonalizable with $\hat{\mathbf{A}}$ by construction~\cite[Prop. 1]{segarra2015distributed}. However, this requires extra computation to interpolate the designed frequency responses and will likely necessitate a high value of $L$.

Note that $\rho$ can also be optimized in the vertex domain by minimizing $\rho=\|\mathbf{s}^{\top} \mathbf{\bar{A}}\|_{2}= \|\mathbf{s}^{\top} \mathbf{\hat{A}} \mathbf{H}\|_{2} = \|\sum_{l=0}^{L-1} h_l \mathbf{s}^{\top} \mathbf{\hat{A}}^{l+1}\|_{2}$ with respect to the graph filter coefficients $\mathbf{h}$. This way, one eliminates the need to calculate eigenvectors and eigenvalues of the graph Laplacian (cf. Remark \ref{rem:eigenvectors}). However, the design of constraints for this formulation is less intuitive than in the frequency domain and becomes non-trivial. We leave this interesting endeavor as a future research direction.\\

\begin{remark}[Flexible use of the proposed filter designs]\label{rem:flexible}\normalfont
The designed fair filters $\tilde{\mathbf{h}}^{\text{f}}$, $\mathbf{h}^{\text{f}}$, and $\tilde{\mathbf{h}}_{cf}^{\text{f}}$ can be employed in a flexible way to mitigate bias for different graph-based learning algorithms. They can be applied to the graph signals that are input to or output from the learning algorithms. Models designed for attributed graphs generally utilize the information from both the nodal features and graph topology \cite{gcn}. Thus, the proposed filters can be applied to the nodal features before they are fed to the learning pipeline in order to prevent the amplification of bias due to the graph connectivity. Alternatively, for any algorithm that outputs a graph signal (e.g., node labels in node classification), $\tilde{\mathbf{h}}^{\text{f}}$, $\mathbf{h}^{\text{f}}$, and $\tilde{\mathbf{h}}_{cf}^{\text{f}}$ can be employed on the output graph signal as fairness-aware post-processing operators. Overall, the impact of the proposed fair filter designs can permeate several GNN-based learning frameworks in a versatile manner. 
\end{remark}


\subsection{Discussion}\label{subsec:disc}
We have proposed three novel designs with complementary strengths to mitigate bias in the network topology via graph filtering. Each design has certain advantages over the others when it comes to manipulating the effect that input graph structures and sensitive attributes have on learned representations. In the first design, a fair graph filter $\tilde{\mathbf{h}}^{\text{f}}$, is obtained by solving a convex optimization problem that directly minimizes the bias measure $\rho$. Compared to $\tilde{\mathbf{h}}^{\text{f}}_{cf}$ whose design is based on an upper bound on $\rho$, $\tilde{\mathbf{h}}^{\text{f}}$ is expected to yield better bias mitigation performance, especially when the bound gets looser for the input graph. Moreover, as $\tilde{\mathbf{h}}^{\text{f}}$ has higher degrees of freedom than the polynomial filter $\mathbf{h}^{\text{f}}$, again its application is expected to decrease $\rho$ in a more effective way. On the other hand, both $\tilde{\mathbf{h}}^{\text{f}}_{cf}$ and $\mathbf{h}^{\text{f}}$ provide computationally more efficient bias mitigation solutions than $\tilde{\mathbf{h}}^{\text{f}}$. Furthermore, while $\tilde{\mathbf{h}}^{\text{f}}_{cf}$ is given in closed form and thus eliminates the need for iterative solvers, the number of optimization variables in the problem defining $\mathbf{h}^{\text{f}}$ is independent of the input graph size (the complexity of the sorting operation in the recursive computation of $\tilde{\mathbf{h}}^{\text{f}}_{cf}$ still grows with $N$). Granted, the number of constraints in \eqref{eq:conv} does depend on $N$, and that is why a full-blown vertex domain formulation is still of interest; see the discussion preceding Remark \ref{rem:flexible}. All in all, both $\tilde{\mathbf{h}}^{\text{f}}_{cf}$ and $\mathbf{h}^{\text{f}}$ can provide the most efficient solution based on the input graph properties. 

Overall, all our proposed fairness-aware graph filter designs can be employed in a flexible and efficient manner in several graph-based ML frameworks. For example, within GNN structures, these filters can be utilized as \emph{pre-trained} bias mitigation operators before each GNN layer, e.g., see Figure \ref{fig:ex}. It is important to emphasize that the employment of these filters as bias mitigation sub-layers within NNs does not modify the training process, unlike the majority of existing approaches that utilize fairness-aware regularizers and constraints \cite{fairgnn, bose2019compositional, debiasing, guo2022learning, dyadic, buyl2021kl}. Therefore, our filters can lead to more stable training compared to these strategies, especially adversarial regularization-based ones that are known to suffer from instability issues \cite{kodali2017convergence}. Moreover, the proposed filters need to be computed only once for a given $\mathcal{G}$, after which they can be utilized for various tasks on said graph.

\section{Experimental Results}
\label{sec:exps}
\subsection{Dataset and experimental setup}
\label{subsec:datas}
\begin{table}[h]
	\centering
\caption{Dataset statistics. }
\label{table:stats}
\begin{tabular}{c c c c c c c}
\toprule
                                                    Dataset &  $|\mathcal{S}_{-1}|$ & $|\mathcal{S}_{1}|$ & $|\mathcal{Y}_{-1}|$ & $|\mathcal{Y}_{1}|$ 
                                                    & $|\mathcal{E}|$                   \\ 
\midrule
Pokec-z  & $4851$ & $2808$ & $3856$ & $3803$ & $29476$ \\
Pokec-n  & $4040$ & $2145$ & $3432$ & $2753$ & $21844$ \\
\bottomrule
\end{tabular}
\end{table}

\begin{table*}[ht]
	\centering
\caption{Proposed filters as bias mitigation layers in a GNN model.}

\label{table:gnn}
\begin{scriptsize}
\begin{tabular}{l c c c c c c}
\toprule
                                                    & \multicolumn{3}{{c}}{Pokec-z}  & \multicolumn{3}{{c}}{Pokec-n}                                 \\ 
\cmidrule(r){2-4} \cmidrule(r){5-7}
                       & Accuracy ($\%$) & $\Delta_{S P}$ ($\%$) & $\Delta_{E O}$ ($\%$) & Accuracy ($\%$) & $\Delta_{S P}$ ($\%$) & $\Delta_{E O}$ ($\%$) \\\midrule
{GNN} 
                   & $ \textbf{66.52} \pm 0.27$ & $6.79 \pm 2.45$  & $7.26 \pm 3.29$ & $ 64.96 \pm 0.19$ & $6.79 \pm 2.45$  & $7.26 \pm 3.29$ 

\\ \cmidrule(r){1-7}  
{Adversarial} 
                   & $  64.26 \pm 1.79$ & $4.85 \pm 2.16$  & $5.99 \pm 2.71$ & $  64.22 \pm 0.71$ & $4.34 \pm 3.87$  & $3.84 \pm 2.71$   
    \\ \cmidrule(r){1-7}  
{\color{mpurple}EDITS \cite{dong2022edits}} 
                   & $ {\color{mpurple} 62.67 \pm 2.64}$ & ${\color{mpurple}3.17 \pm 2.49}$  & ${\color{mpurple}4.54 \pm 2.99}$ & $ {\color{mpurple} 62.67 \pm 0.51}$ & ${\color{mpurple}4.40 \pm 2.41}$  & ${\color{mpurple}5.38 \pm 1.92}$   
    \\ \cmidrule(r){1-7}  
{\color{mpurple}FairDrop \cite{spinelli2021fairdrop}} 
                   & $ {\color{mpurple} 66.79 \pm 0.65}$ & ${\color{mpurple}9.11 \pm 1.89}$  & ${\color{mpurple}8.35 \pm 3.81}$ & $ {\color{mpurple} 64.33 \pm 0.44}$ & ${4.46 \pm 1.67}$  & $5.02 \pm 1.84$   
   \\ \cmidrule(r){1-7}  
 {  $\tilde{\mathbf{h}}^{\text{f}}+ $ GNN } 
                  & $  66.05 \pm 0.30$ & $\mathbf{1.08} \pm 1.20$  & $2.20 \pm 2.06$  & $  \mathbf{65.07} \pm 0.21$ & $\mathbf{2.12} \pm 1.01$  & $\mathbf{2.42} \pm 1.96$\\ \cmidrule(r){1-7} 
                  
{$\tilde{\mathbf{h}}_{cf}^{\text{f}} + $ GNN } 
                   & $  66.34 \pm 0.27$ & $1.23 \pm 1.43$  & $\mathbf{2.15} \pm 1.96$ & $  \mathbf{65.05} \pm 0.21$ & $\mathbf{2.13} \pm 0.93$  & $\mathbf{2.39} \pm 1.78$ \\ \cmidrule(r){1-7}
{  $\mathbf{h}^{\text{f}}+ $ GNN} 
                  & $  66.32 \pm 0.27$ & $3.36 \pm 1.99$  & $4.21 \pm 2.43$  & $  \mathbf{65.07} \pm 0.21$ & $4.39 \pm 2.01$  & $5.13 \pm 2.00$\\


\bottomrule
\end{tabular}
\end{scriptsize}
\end{table*}

\begin{table}[ht]
	\centering
\caption{Total Pearson correlation coefficients \cite{kose2022fair} between representations and sensitive attributes before/after $\mathbf{\tilde{h}}^{f}$.}

\label{table:corr}
\begin{scriptsize}
\begin{tabular}{l c c c c}
\toprule
                                                    & \multicolumn{2}{{c}}{Pokec-z}  & \multicolumn{2}{{c}}{Pokec-n}                                 \\ 
\cmidrule(r){2-3} \cmidrule(r){4-5}
                       & Before $\mathbf{\tilde{h}}^{f}$ & After $\mathbf{\tilde{h}}^{f}$ & Before $\mathbf{\tilde{h}}^{f}$ & After $\mathbf{\tilde{h}}^{f}$ \\\midrule
 {  $1$st layer }   & $  4.27$ & $1.98$  & $  4.60$ & $1.94$  
                  \\ \cmidrule(r){1-5} 
                   {  $2$nd layer} 
                  & $ 4.25 \pm 0.03$ & $2.96 \pm 0.01$   & $ 3.12 \pm 0.05$ & $2.21 \pm 0.02$ 
                  \\
\bottomrule
\end{tabular}
\end{scriptsize}
\end{table}

\noindent\textbf{Datasets.} The performance of the proposed fair filter designs is evaluated on the node classification task over real-world social networks Pokec-z and Pokec-n. Pokec-z and Pokec-n are the sampled versions of the 2012 Pokec network \cite{pokec}, which is a Facebook-like social network in Slovakia \cite{fairgnn}. The region of the users is utilized as the sensitive attribute, where the users are from two major regions. Labels for the node classification task are the binarized working field of the users.
Statistics for the utilized datasets are presented in Table \ref{table:stats}, where $\mathcal{S}_{i}$ and $\mathcal{Y}_{i}$ represent the set of nodes with sensitive attribute and class label $i$, respectively. Note that $N=|\mathcal{S}_{-1}|+|\mathcal{S}_{1}| =|\mathcal{Y}_{-1}|+|\mathcal{Y}_{1}|$.\vspace{2pt}

\noindent \textbf{Evaluation metrics.} Accuracy is adopted as the utility metric of node classification. For fairness assessment, two quantitative measures of group fairness metrics are reported, namely \textit{statistical parity}: $\Delta_{S P}=|P(\hat{y}=1 \mid s=-1)-P(\hat{y}=1 \mid s=1)|$ and \textit{equal opportunity}: $\Delta_{E O}=|P(\hat{y}=1 \mid y=1, s=-1)-P(\hat{y}=1 \mid y=1, s=1)|$,
where $y$ represents the ground truth label, and $\hat{y}$ is the predicted label. Here, statistical parity is a measure for the independence of positive rate from the sensitive attribute, and equal opportunity signifies the level of the independence of true positive rate from the sensitive
attribute. Lower values for $\Delta_{S P}$ and $\Delta_{E O}$ indicate better fairness performance \cite{fairgnn} and are more desirable.\vspace{2pt}

\noindent \textbf{Implementation details. }  
\label{subsec:implementation}
We evaluate the proposed filter designs in two different environments. First, they are employed as bias mitigation sub-layers to filter the input representations to GNN layers in a two-layer graph convolutional network (GCN) \cite{gcn}; see also Figure \ref{fig:ex}. 
The GCN model is trained for node classification by employing the negative log-likelihood function as the objective. 
For this setting, the training set consists of $40\%$ of the nodes, while the remaining nodes are evenly split to create validation and test sets. The hyperparameter $\tau$ is selected via grid search among the values $\{0.0003, 0.0004, 0.0005, 0.0006\}$ for the filters $\tilde{\mathbf{h}}^{\text{f}}$ and $\tilde{\mathbf{h}}^{\text{f}}_{cf}$. Specifically, for $\tilde{\mathbf{h}}^{\text{f}}$, $\tau$ is chosen to be $0.0005$ and $0.0003$ on Pokec-z and Pokec-n, respectively, while it equals $0.0004$ for $\tilde{\mathbf{h}}^{\text{f}}_{cf}$ on both datasets. Moreover, for the proposed polynomial filter, $L$ is selected as $40$ and $50$ on datasets Pokec-z and Pokec-n, respectively, based on a grid search among the values $\{30, 40, 50\}$. To alleviate the hyperparameter tuning step for $\mathbf{h}^{\text{f}}$, $\tau=0.0004$ is directly utilized on both datasets without any fine-tuning. 

Second, to illustrate the use of the fair filters as post-processing operators, we use them to filter the predicted nodal labels computed by the classification algorithm presented in \cite{lp}. In the filtered signal, the components that are larger than a threshold are assigned to the first class, while the others are assigned to the second class. Note that this threshold is selected to be $0$ for labels $-1$ and $1$ in the experiments, however it can be adaptively chosen based on the input graph. In this second setting, $40\%$ of the nodes are used to train the model and the remaining ones contribute to the test set. The hyperparameter tuning process is kept the same as in the case where the filters are employed as pre-processing operators. For $\tilde{\mathbf{h}}^{\text{f}}$, $\tau$ is chosen to be $0.0004$ on both datasets, while it equals to $0.0004$ and $0.0006$ for $\tilde{\mathbf{h}}^{\text{f}}_{cf}$ on Pokec-z and Pokec-n, respectively. For the polynomial filter, $L$ is selected again as $40$ and $50$ on datasets Pokec-z and Pokec-n.

For all experiments, results are obtained for five random data splits, and their average along with the standard deviations are reported in the tables that follow. Further implementation details can be found in the publicly available code shared as supplementary material to this paper, which can be used to generate all results reported in this section.\vspace{2pt}

\noindent \textbf{Baselines. }  
 Fairness-aware baselines in the experiments include adversarial regularization \cite{fairgnn}, {\color{mpurple} EDITS \cite{dong2022edits}, and FairDrop \cite{spinelli2021fairdrop}. Adversarial regularization is a widely utilized fairness enhancement strategy, where an adversary is trained to predict the sensitive attributes. For adversarial regularization, the multiplier of the regularizer is tuned via a grid search among the values $\{ 0.1, 1, 10, 100, 1000\}$ (the multiplier of classification loss is assigned to be $1$). Furthermore, EDITS \cite{dong2022edits} is a model-agnostic debiasing framework that mitigates the bias in attributed networks before they are fed into any GNN. Specifically, it creates debiased versions of the nodal attributes and the graph structure, which are then input to the GCN network used here for node classification. For EDITS, the threshold proportion is tuned among the values $\{ 0.015, 0.02, 0.06, 0.29\}$, where these values are the optimized thresholds for other datasets used in \cite{dong2022edits}. Finally, FairDrop \cite{spinelli2021fairdrop} proposes a biased edge dropout strategy for a more balanced graph topology in terms of the edges connecting different (and the same) sensitive groups. The hyperparameter $\delta$ in the FairDrop algorithm is tuned among the values $\{ 0.7, 0.8, 0.9\}$}. 

\begin{table*}[h!]
	\centering
\caption{Adaptive Filter, $\tilde{\mathbf{h}}^{\text{f}}$ as Fairness-aware Post-processing Operator.}

\label{table:post}
\begin{scriptsize}
\begin{tabular}{l c c c c c c}
\cline{2-7}
\toprule
                                                    & \multicolumn{3}{{c}}{Pokec-z} & \multicolumn{3}{{c}}{Pokec-n}                                   \\ 
\cmidrule(r){2-7}
                       & Accuracy ($\%$) & $\Delta_{S P}$ ($\%$) & $\Delta_{E O}$ ($\%$)  & Accuracy ($\%$) & $\Delta_{S P}$ ($\%$) & $\Delta_{E O}$ ($\%$) \\\midrule
{\cite{lp}}
                   & $ \mathbf{64.83} \pm 0.54$ & $8.33 \pm 2.64$  & $9.38 \pm 2.54$ & $ 65.44 \pm 0.42$ & $6.27 \pm 4.83$  & $8.78 \pm 6.18$
    \\ \cmidrule(r){1-7}  
    {\cite{lp} +  $\tilde{\mathbf{h}}^{\text{f}}$ } 
                  & $  64.44 \pm 0.38$ & $\mathbf{1.58} \pm 1.01$  & $1.69 \pm 1.41$  & $  65.75 \pm 0.91$ & $\mathbf{2.11} \pm 2.19$  & $\mathbf{3.48} \pm 3.44$    \\
                  \cmidrule(r){1-7} 
                  {\cite{lp}$ +  \tilde{\mathbf{h}}_{cf}^{\text{f}}$} 
                   & $  64.70 \pm 0.48$ & $\mathbf{1.57} \pm 1.24$  & $\mathbf{1.55} \pm 1.21$   & $  \mathbf{65.80} \pm 0.86$ & $2.27 \pm 2.14$  & $\mathbf{3.48} \pm 3.34$  \\ \cmidrule(r){1-7} 
                  {\cite{lp} +   $\mathbf{h}^{\text{f}}$ } 
                  & $  64.62 \pm 0.54$ & $5.19 \pm 2.39$  & $6.09 \pm 3.00$  & $  \mathbf{65.78} \pm 0.93$ & $4.90 \pm 3.28$  & $6.44 \pm 5.48$\\

                  
\bottomrule
\end{tabular}
\end{scriptsize}
\end{table*}

\begin{table*}[ht]
	\centering
\caption{Ablation study for the employment of $\tilde{\mathbf{h}}^{\text{f}}$ as bias mitigation layers.}

\label{table:ablation}
\begin{scriptsize}
\begin{tabular}{l c c c c c c}
\toprule
                                                    & \multicolumn{3}{{c}}{Pokec-z}  & \multicolumn{3}{{c}}{Pokec-n}                                 \\ 
\cmidrule(r){2-4} \cmidrule(r){5-7}
                       & Accuracy ($\%$) & $\Delta_{S P}$ ($\%$) & $\Delta_{E O}$ ($\%$) & Accuracy ($\%$) & $\Delta_{S P}$ ($\%$) & $\Delta_{E O}$ ($\%$) \\\midrule
{GNN} 
                   & $ \textbf{66.52} \pm 0.27$ & $6.79 \pm 2.45$  & $7.26 \pm 3.29$ & $ 64.96 \pm 0.19$ & $6.79 \pm 2.45$  & $7.26 \pm 3.29$ 

\\ \cmidrule(r){1-7}  
 {  $\tilde{\mathbf{h}}^{\text{f}}+ $ GNN } 
                  & $  66.05 \pm 0.30$ & $\mathbf{1.08} \pm 1.20$  & $2.20 \pm 2.06$  & $  \mathbf{65.07} \pm 0.21$ & $2.12 \pm 1.01$  & $\mathbf{2.42} \pm 1.96$

                    \\ \cmidrule(r){1-7}  
 {  $\tilde{\mathbf{h}}^{\text{f}}$ before first layer } 
                  & $  66.22 \pm 0.23$ & $1.33 \pm 1.00$  & $\mathbf{1.98} \pm 2.18$  & $  \mathbf{65.05} \pm 0.31$ & $2.49 \pm 1.08$  & $2.55 \pm 2.32$
                  \\ \cmidrule(r){1-7} 
{  $\tilde{\mathbf{h}}^{\text{f}}$ before second layer } 
                  & $  66.17 \pm 0.24$ & $1.13 \pm 1.26$  & $2.06 \pm 1.80$  & $  \mathbf{65.10} \pm 0.18$ & $\mathbf{2.05} \pm 1.09$  & $2.46 \pm 1.91$
                  \\  
                    

\bottomrule
\end{tabular}
\end{scriptsize}
\end{table*}

\begin{table*}[ht]
	\centering
\caption{Sensitivity analysis for the hyperparameter $\tau$ in $\tilde{\mathbf{h}}^{\text{f}}$ as a bias mitigation layer.}

\label{table:sens11}
\begin{scriptsize}
\begin{tabular}{l c c c c c c}
\toprule
                                                    & \multicolumn{3}{{c}}{Pokec-z}  & \multicolumn{3}{{c}}{Pokec-n}                                 \\ 
\cmidrule(r){2-4} \cmidrule(r){5-7}
                       & Accuracy ($\%$) & $\Delta_{S P}$ ($\%$) & $\Delta_{E O}$ ($\%$) & Accuracy ($\%$) & $\Delta_{S P}$ ($\%$) & $\Delta_{E O}$ ($\%$) \\\midrule
{GNN} 
                   & $ 66.52 \pm 0.27$ & $6.79 \pm 2.45$  & $7.26 \pm 3.29$ & $ 64.96 \pm 0.19$ & $6.79 \pm 2.45$  & $7.26 \pm 3.29$ 
\\ \cmidrule(r){1-7}  

{$\tau=0.0003$} 
                   & $ { 66.33 \pm 0.25}$ & $1.34 \pm 1.39$  & ${2.26 \pm 2.07}$ & $ {\color{mpurple} \mathbf{65.07} \pm 0.21}$ & $\mathbf{2.12} \pm 1.01$  & $\mathbf{2.42} \pm 1.96$   
    \\ \cmidrule(r){1-7} 
{$\tau=0.0004$} 
                   & $  66.33 \pm 0.22$ & $1.15 \pm 1.33$  & $2.26 \pm 1.75$ & $  65.04 \pm 0.20$ & $2.18 \pm 0.95$  & $2.47 \pm 1.87$   
    \\ \cmidrule(r){1-7}  
 
{$\tau=0.0005$} 
                  & $  66.05 \pm 0.30$ & $\mathbf{1.08} \pm 1.20$  & $\mathbf{2.20} \pm 2.06$  & $  \mathbf{65.05} \pm 0.18$ & $2.41 \pm 0.86$  & $2.82 \pm 1.68$ \\ \cmidrule(r){1-7}  
{$\tau=0.0006$} 
                  & $ \mathbf{ 66.76} \pm 0.25$ & $1.49 \pm 1.27$  & $2.73 \pm 2.27$  & $  64.97 \pm 0.12$ & $2.46 \pm 0.66$  & $2.93 \pm 1.58$ \\ 
\bottomrule
\end{tabular}
\end{scriptsize}
\end{table*}
\begin{table*}[ht]
	\centering
\caption{Sensitivity analysis for the hyperparameter $\tau$ in $\tilde{\mathbf{h}}_{cf}^{\text{f}}$ as a bias mitigation layer.}

\label{table:sens12}
\begin{scriptsize}
\begin{tabular}{l c c c c c c}
\toprule
                                                    & \multicolumn{3}{{c}}{Pokec-z}  & \multicolumn{3}{{c}}{Pokec-n}                                 \\ 
\cmidrule(r){2-4} \cmidrule(r){5-7}
                       & Accuracy ($\%$) & $\Delta_{S P}$ ($\%$) & $\Delta_{E O}$ ($\%$) & Accuracy ($\%$) & $\Delta_{S P}$ ($\%$) & $\Delta_{E O}$ ($\%$) \\\midrule
{GNN} 
                   & $ \textbf{66.52} \pm 0.27$ & $6.79 \pm 2.45$  & $7.26 \pm 3.29$ & $ 64.96 \pm 0.19$ & $6.79 \pm 2.45$  & $7.26 \pm 3.29$ 
\\ \cmidrule(r){1-7}  
{$\tau=0.0003$} 
                   & $  66.30 \pm 0.24$ & $1.34 \pm 1.38$  & $2.34 \pm 2.07$ & $  \mathbf{65.07} \pm 0.21$ & $\mathbf{2.12} \pm 1.01$  & $\mathbf{2.42} \pm 1.96$   
    \\ \cmidrule(r){1-7}  
{$\tau=0.0004$} 
                   & $  66.34 \pm 0.27$ & $1.23 \pm 1.43$  & $\mathbf{2.15} \pm 1.96$ & $  \mathbf{65.05} \pm 0.21$ & $\mathbf{2.13} \pm 0.93$  & $\mathbf{2.39} \pm 1.78$   
    \\ \cmidrule(r){1-7}  
{$\tau=0.0005$} 
                   & $ { 66.34 \pm 0.22}$ & ${\mathbf{1.19} \pm 1.36}$  & ${2.35 \pm 1.74}$ & $ {\color{mpurple} 64.99 \pm 0.19}$ & $2.16 \pm 0.82$  & $\mathbf{2.39} \pm 1.84$   
    \\ \cmidrule(r){1-7}  
{$\tau=0.0006$} 
                   & $  66.19 \pm 0.36$ & $1.66 \pm 1.10$  & $2.64 \pm 2.27$ & $  65.01 \pm 0.10$ & $2.28 \pm 0.92$  & $2.58 \pm 1.83$  \\ 
\bottomrule
\end{tabular}
\end{scriptsize}
\end{table*}


\subsection{Results}
Comparative results for the proposed fairness-aware graph filters, $\tilde{\mathbf{h}}^{\text{f}}$, $\tilde{\mathbf{h}}_{cf}^{\text{f}}$, and $\mathbf{h}^{\text{f}}$ are presented in Table \ref{table:gnn}, for the case where they are utilized as bias mitigation layers. The natural baseline for the proposed strategies is to employ the GNN model without any fairness-aware operations, where this scheme is denoted by ``GNN'' in Table \ref{table:gnn}. Moreover, ``Adversarial'', ``EDITS'', and ``FairDrop'' in Table \ref{table:gnn} stand for the adoption of adversarial regularization in training \cite{fairgnn}, and state-of-the-art fairness-aware baselines EDITS \cite{dong2022edits}, and FairDrop \cite{spinelli2021fairdrop}, respectively. 

The results in Table \ref{table:gnn} demonstrate that all of the proposed filter designs improve upon the  naive GNN baseline in terms of both fairness metrics, while also providing similar utility. Specifically, the proposed strategies achieve $30\%$ to $90\%$ improvement in all fairness measures for every dataset compared to GNN. The results further demonstrate the superior fairness performance of $\tilde{\mathbf{h}}^{\text{f}}$ over the polynomial filter $\mathbf{h}^{\text{f}}$, which is expected, as $\tilde{\mathbf{h}}^{\text{f}}$ can better optimize our bias metric $\rho$ with a higher number of degrees of freedom compared to $\mathbf{h}^{\text{f}}$. Moreover, it is observed that our designs, $\tilde{\mathbf{h}}^{\text{f}}$ and $\tilde{\mathbf{h}}_{cf}^{\text{f}}$, lead to similar fairness improvements, which signifies that the derived upper bound in \eqref{eq:mfin} is a successful surrogate bias measure for the Pokec graphs.

The results in Table \ref{table:gnn} also show that $\tilde{\mathbf{h}}^{\text{f}}$ and $\tilde{\mathbf{h}}_{cf}^{\text{f}}$ always achieve better fairness performance together with similar/better utility, compared to other fairness-aware baselines, namely Adversarial \cite{fairgnn}, EDITS \cite{dong2022edits} and FairDrop \cite{spinelli2021fairdrop}. While the polynomial filter, $\mathbf{h}^{\text{f}}$ generally leads to a similar fairness improvement compared to other fairness-aware baselines, this fairness performance is typically accompanied by a better utility for $\mathbf{h}^{\text{f}}$. Furthermore, it can be observed that the employment of the novel filters generally leads to the lowest standard deviation values, and therefore enhances the stability of the results. Overall, the results corroborate the efficacy of the proposed filter designs design in mitigating bias, while also providing similar utility measures compared to the state-of-the-art fairness-aware baselines.

Fairness performance in Table \ref{table:gnn} is reported in terms of commonly utilized group fairness measures; namely, statistical parity and equal opportunity, same as prior works \cite{dong2022edits, spinelli2021fairdrop, fairgnn}. In Table \ref{table:corr}, we also provide the total correlation values between the sensitive attributes and representations that are input to or output from the designed filter $\tilde{\mathbf{h}}^{\text{f}}$. 
With reference to the two-layer GNN architecture in Figure \ref{fig:ex} that is used for this experiment, in the first row of Table \ref{table:corr} we report $\|\mathbf{s}^\top \mathbf{X}\|_1$ (before $\tilde{\mathbf{h}}^{\text{f}}$) and $\|\mathbf{s}^\top \bar{\mathbf{X}}\|_1$ (after $\tilde{\mathbf{h}}^{\text{f}}$). Likewise, in the second row, we report the total correlations $\|\mathbf{s}^\top \mathbf{H}_1\|_1$ (before $\tilde{\mathbf{h}}^{\text{f}}$) and $\|\mathbf{s}^\top \bar{\mathbf{H}}_1\|_1$ (after $\tilde{\mathbf{h}}^{\text{f}}$). Overall, the results demonstrate that $\tilde{\mathbf{h}}^{\text{f}}$ can significantly reduce the correlation that is expected to lead to intrinsic bias, which is also reflected in the improved $\Delta_{SP}$ and $\Delta_{EO}$ values in Table \ref{table:gnn}. This correlation reduction is observed at both stages in this two-layer GCN and for both datasets. Notice that $\|\mathbf{s}^\top \mathbf{H}_1\|_1>\|\mathbf{s}^\top \bar{\mathbf{X}}\|_1$ because the GCN layer (mapping $\bar{\mathbf{X}}$ to $\mathbf{H}_1$) aggregates information using $\hat{\mathbf{A}}$, and the latter is highly correlated with $\mathbf{s}$ as discussed in Section \ref{subsec:met}. Furthermore, by comparing the first and second rows in Table \ref{table:corr} it is observed that the correlation reduction is more pronounced before any GNN layer is used to process the data. Since the representations output by a GNN layer are learned to maximize utility, this phenomenon is an expected result of the corresponding fairness-utility tradeoff. 


We also provide experimental results herein, whereby the proposed fair filters are employed as post-processing operators on the predicted labels (a graph signal) of a node classification algorithm. For this setting, the classification results are obtained via the algorithm presented in \cite{lp}, and we subsequently filter these predicted labels to debiase them. The results are presented in Table \ref{table:post}, which exhibit similar tendencies as those in Table \ref{table:gnn}. Overall, our experiments confirm the efficacy of the proposed graph filters in improving fairness measures and also for the setting where they are employed as post-processing operators. In addition, similar to the findings of Table \ref{table:gnn}, better fairness measures are typically accompanied by better stability and similar utility to the fairness-agnostic baseline \cite{lp}. \vspace{2pt}

\noindent \textbf{Ablation study.} To examine the effect of filter placement in the adopted two-layer GCN, we carry out an ablation study whose results are presented in Table \ref{table:ablation}. Therein, ``$\tilde{\mathbf{h}}^{\text{f}} + $GNN'' corresponds to an architecture where the designed filter is employed before both of the GCN layers; exactly as in Figure \ref{fig:ex}. In the meantime, ``$\tilde{\mathbf{h}}^{\text{f}}$ before first layer'' and  ``$\tilde{\mathbf{h}}^{\text{f}}$ before second layer'' denote architectures that utilize \emph{a single filter} placed before the first layer only, or, the second layer only, respectively. Naturally, ``GNN'' corresponds to a baseline model which does not employ bias-mitigating filters. The key conclusion from this study is that using at least one filter, regardless of its placement within the architecture, always helps improve fairness measures. Furthermore, if only one filter is used, we find that placing it deeper (meaning before the second layer) results in better/similar fairness measures compared to an earlier placement of the filter. Finally, results in Table \ref{table:ablation} are inconclusive as to whether employing the filter before all layers is always the best strategy due to the high variances. Still, we find that employing the proposed filter before every layer achieves a similar fairness performance in the worst case compared to single filter placements. Thus, we suggest the use of filters in all layers for a simpler design. 

\noindent \textbf{Sensitivity analyses.} For our designs $\tilde{\mathbf{h}}^{\text{f}}$ and $\tilde{\mathbf{h}}_{cf}^{\text{f}}$,  sensitivity analyses are presented in Tables \ref{table:sens11} and \ref{table:sens12}, respectively; in order to assess their sensitivity to their hyperparameter, $\tau$, for the case where they are employed as bias mitigation layers. Note that Figure \ref{fig:within} suggests that the number of frequencies where the magnitudes of $\tilde{\mathbf{s}}$ are markedly higher than $\tilde{\mathbf{y}}$ is around $3$ for both datasets. Thus, the range of $\tau$ is chosen so that the total number of spectral components for which the filters' frequency response is approximately equal to $0$ is less than $10$. This way, we expect to improve markedly in terms of fairness without incurring a major degradation in utility. Overall, the results demonstrate that the filters, $\tilde{\mathbf{h}}^{\text{f}}$ and $\tilde{\mathbf{h}}_{cf}^{\text{f}}$, always lead to better fairness measures compared to the fairness-agnostic GNN baseline, within a broad range of hyperparameter choices. The sensitivity analyses for setting where the filters are used as post-processing operators are deferred to the Appendix, which lead to a similar conclusion. 

\subsection{On the effective network operator}

\begin{figure}[t]
    \centering
        \includegraphics[width=\linewidth]{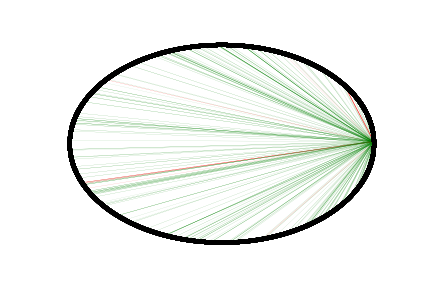}
        \includegraphics[width=\linewidth]{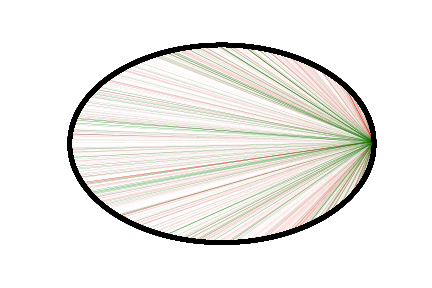}
  \caption{ For a sampled subgraph from the Pokec network, the distribution of the intra-edges (green) and inter-edges (red) in the effective network topology without (top)/ with (bottom) the application of $\tilde{\mathbf{h}}^{\text{f}}$.}   
    \label{fig:effective}
\end{figure}

Based on \eqref{eq:effec}, the effective network operator used in the learning process is defined to be $\mathbf{\bar{A}}:=\mathbf{V} (\mathbf{I}_{N} - \boldsymbol{\Lambda} )\text{diag}(\tilde{\mathbf{h}}) \mathbf{V}^{\top}$. Therefore, employing the proposed graph filters can be interpreted as a modification to the original graph connectivity with the final aim of reducing the structural bias. For graphs encountered in various application domains, it is typically observed that the number of edges connecting the same sensitive groups, intra-edges, is significantly larger than the number of edges linking different sensitive groups, inter-edges, due to the homophily principle \cite{kose2022fair, fairgnn}. Based on this observation, several studies have demonstrated that the imbalance between the number of intra and inter-edges is a major factor for the resulting algorithmic bias \cite{dyadic, spinelli2021fairdrop, kose2022fair}. Motivated by this, here we visualize the intra- and inter-edges in a sub-graph extracted from the Pokec network and their distributional change in the effective network operator after applying the filter $\tilde{\mathbf{h}}^{\text{f}}$. Specifically, Figure \ref{fig:effective} illustrates the intra- and inter-edges using the colors green and red, respectively, for the original subgraph and the modified effective graph structure after $\tilde{\mathbf{h}}^{\text{f}}$ is employed. The figure reveals that the application of $\tilde{\mathbf{h}}^{\text{f}}$ has a balancing effect in the number of intra- and inter-edges in the resulting graph structure, which can help visualize the bias mitigation mechanisms of the proposed strategies. Note that this balancing effect is also supported by comparing the total weights of intra- and inter-edges in the original network operator $\mathbf{\hat{A}}$ versus the effective one $\mathbf{\bar{A}}$ obtained when accounting for the filters. Specifically, in the original topology, the total weights of the intra- and inter-edges are $3102$ and $165$, respectively. On the other hand, the application of $\tilde{\mathbf{h}}^{\text{f}}$ has a balancing effect resulting in $1824$ and $1424$ intra- and inter-edges, respectively.

\section{Conclusion}
In this study, we put forth three novel graph filter designs with the goal of mitigating bias stemming from the graph topology. Specifically, we first introduce a bias metric, $\rho$, that is applicable to unsupervised learning settings and measures the correlation between the connectivity pattern and sensitive attributes. Our first graph filter design, $\tilde{\mathbf{h}}^{\text{f}}$, is obtained as a solution to a convex optimization problem that minimizes $\rho$. For a more efficient solution, we carry out a bias analysis and formulate an LP problem that targets the minimization of an upper bound on $\rho$. Remarkably, we show the LP attains a closed-form optimal solution for a fair graph filter $\tilde{\mathbf{h}}^{\text{f}}_{cf}$. Finally, we take a fair, polynomial graph convolution filter, $\mathbf{h}^{\text{f}}$, into consideration, where the number of optimization variables in the corresponding design is independent of the input graph size. The proposed fairness-aware graph filters can be flexibly employed in various graph-based ML and SP algorithms at different stages of learning. Node classification experiments on real-world networks demonstrate that all of the proposed filter designs mitigate bias effectively. We observe they typically lead to better fairness measures when compared to other state-of-the-art fairness-aware baselines, and without notably sacrificing utility (i.e., classification accuracy).

This work opens up several exciting future directions. First, the proposed designs assume the existence of a single sensitive attribute, whereas considering multiple sensitive attributes in our designs would be certainly of interest. Second, this study focuses on the linear correlation between the graph structure and sensitive attributes as a bias measure, and extending our analysis to non-linear correlation metrics is another important future direction. Finally, robust adaptations of the proposed designs to accommodate several real-world challenges, including but not limited to missing sensitive attribute/graph structure information, and privacy constraints, are important components of our future research agenda.

%

\appendices
\section{Further Sensitivity Analyses}
\label{app:sens}
\begin{table*}[h!]
	\centering
\caption{Sensitivity analysis for the hyperparameter $\tau$ in $\tilde{\mathbf{h}}^{\text{f}}$ as a post-processor.}

\label{table:sens21}
\begin{scriptsize}
\begin{tabular}{l c c c c c c}
\cline{2-7}
\toprule
                                                    & \multicolumn{3}{{c}}{Pokec-z} & \multicolumn{3}{{c}}{Pokec-n}                                   \\ 
\cmidrule(r){2-7}
                       & Accuracy ($\%$) & $\Delta_{S P}$ ($\%$) & $\Delta_{E O}$ ($\%$)  & Accuracy ($\%$) & $\Delta_{S P}$ ($\%$) & $\Delta_{E O}$ ($\%$) \\\midrule
{\cite{lp}}
                   & $ \mathbf{64.83} \pm 0.54$ & $8.33 \pm 2.64$  & $9.38 \pm 2.54$ & $ 65.44 \pm 0.42$ & $6.27 \pm 4.83$  & $8.78 \pm 6.18$
    \\ \cmidrule(r){1-7}  
{$\tau=0.0003$} 
                   & $  64.40 \pm 0.33$ & $1.67 \pm 0.95$  & $1.83 \pm 1.30$   & $  65.48 \pm 0.50$ & $2.33 \pm 2.27$  & $3.85 \pm 3.58$    \\\cmidrule(r){1-7}  
{$\tau=0.0004$} 
                   & $  64.44 \pm 0.38$ & $\mathbf{1.58} \pm 1.01$  & $\mathbf{1.69} \pm 1.41$   & $  \mathbf{65.75} \pm 0.91$ & $\mathbf{2.11} \pm 2.19$  & $\mathbf{3.48} \pm 3.44$    \\\cmidrule(r){1-7} 
{$\tau=0.0005$} 
                   & $  64.45 \pm 0.42$ & $1.94 \pm 0.81$  & $2.28 \pm 1.57$   & $  \mathbf{65.78} \pm 0.91$ & $\mathbf{2.14} \pm 2.20$  & $\mathbf{3.50} \pm 3.47$    \\\cmidrule(r){1-7}  
{$\tau=0.0006$} 
                   & $  64.27 \pm 0.48$ & $2.01 \pm 1.17$  & $2.63 \pm 1.85$   & $  65.81 \pm 0.88$ & $2.20 \pm 2.18$  & $\mathbf{3.51} \pm 3.44$    \\   
\bottomrule
\end{tabular}
\end{scriptsize}
\end{table*}

\begin{table*}[h!]
	\centering
\caption{Sensitivity analysis for the hyperparameter $\tau$ in $\tilde{\mathbf{h}}_{cf}^{\text{f}}$ as a post-processor.}

\label{table:sens22}
\begin{scriptsize}
\begin{tabular}{l c c c c c c}
\cline{2-7}
\toprule
                                                    & \multicolumn{3}{{c}}{Pokec-z} & \multicolumn{3}{{c}}{Pokec-n}                                   \\ 
\cmidrule(r){2-7}
                       & Accuracy ($\%$) & $\Delta_{S P}$ ($\%$) & $\Delta_{E O}$ ($\%$)  & Accuracy ($\%$) & $\Delta_{S P}$ ($\%$) & $\Delta_{E O}$ ($\%$) \\\midrule
{\cite{lp}}
                   & $ \mathbf{64.83} \pm 0.54$ & $8.33 \pm 2.64$  & $9.38 \pm 2.54$ & $ 65.44 \pm 0.42$ & $6.27 \pm 4.83$  & $8.78 \pm 6.18$
    \\ \cmidrule(r){1-7} 
    {$\tau=0.0003$} 
                   & $  \mathbf{64.80} \pm 0.39$ & $1.64 \pm 0.82$  & $1.76 \pm 1.00$   & $  65.68 \pm 0.33$ & $2.61 \pm 2.19$  & $5.57 \pm 2.91$    \\\cmidrule(r){1-7}  
{$\tau=0.0004$} 
                   & $  64.70 \pm 0.48$ & $\mathbf{1.57} \pm 1.24$  & $\mathbf{1.55} \pm 1.21$   & $  65.48 \pm 0.50$ & $2.33 \pm 2.04$  & $4.54 \pm 3.00$    \\\cmidrule(r){1-7}  
{$\tau=0.0005$} 
                   & $  64.35 \pm 0.50$ & $1.83 \pm 1.14$  & $2.37 \pm 0.47$   & $  \mathbf{65.79} \pm 0.90$ & $\mathbf{2.24} \pm 2.14$  & $3.55 \pm 3.40$    \\\cmidrule(r){1-7}  
{$\tau=0.0006$} 
                   & $  64.07 \pm 0.39$ & $2.31 \pm 1.70$  & $3.09 \pm 1.72$   & $  \mathbf{65.80} \pm 0.86$ & $\mathbf{2.27} \pm 2.14$  & $\mathbf{3.48} \pm 3.34$    \\   
\bottomrule
\end{tabular}
\end{scriptsize}
\end{table*}

The sensitivity analyses are further provided in Tables \ref{table:sens21} and \ref{table:sens22} for the case where the proposed filters, $\tilde{\mathbf{h}}^{\text{f}}$ and $\tilde{\mathbf{h}}_{cf}^{\text{f}}$, are employed in the post-processing step. The results in these tables signify that the proposed strategies always improve the natural baseline in terms of fairness for a wide range of $\tau$ values also for their employment as post-processing operators.

\bibliographystyle{IEEEtranS}
\bibliography{main}

\end{document}